  \providecommand\BibTeX{{%
    \normalfont B\kern-0.5em{\scshape i\kern-0.25em b}\kern-0.8em\TeX}}}
\newcommand{\yvar}{y}
\newcommand{\numy}{{n_y}}
\newcommand{\actionset}{\mathcal{A}}
\newcommand{\stateset}{\mathcal{S}}
\newcommand{\yset}{Y}
\begin{document}

\title{Maximizing Information Gain in Partially Observable Environments via Prediction Rewards}  



\author{Yash Satsangi}
\affiliation{%
  \institution{University of Alberta}
}
\email{ysatsang@ualberta.ca}

\author{Sungsu Lim}
\affiliation{%
  \institution{University of Alberta}
}
\email{sungsu@ualberta.ca}

\author{Shimon Whiteson}
\affiliation{%
  \institution{University of Oxford}
}
\email{shimon.whiteson@cs.ox.ac.uk}

\author{Frans A.\ Oliehoek}
\affiliation{%
  \institution{Technical University Delft}
}
\email{f.a.oliehoek@tudelft.nl}

\author{Martha White}
\affiliation{%
  \institution{University of Alberta}
}
\email{whitem@ualberta.ca}

\begin{abstract}
Information gathering in a partially observable environment can be formulated as a reinforcement learning (RL), problem where the reward depends on the agent's uncertainty. For example, the reward can be the negative entropy of the agent's belief over an unknown (or hidden) variable. Typically, the rewards of an RL agent are defined as a function of the state-action pairs and not as a function of the belief of the agent; this hinders the direct application of deep RL methods for such tasks. This paper tackles the challenge of using belief-based rewards for a deep RL agent, by offering a simple insight that maximizing any convex function of the belief of the agent can be approximated by instead maximizing a prediction reward: a reward based on prediction accuracy. In particular, we derive the exact error between negative entropy and the expected prediction reward. This insight provides theoretical motivation for several fields using prediction rewards---namely visual attention, question answering systems, and intrinsic motivation---and highlights their connection to the usually distinct fields of active perception, active sensing, and sensor placement. Based on this insight we present deep anticipatory networks (DANs), which enables an agent to take actions to reduce its uncertainty without performing explicit belief inference. We present two applications of DANs: building a sensor selection system for tracking people in a shopping mall and learning discrete models of attention on fashion MNIST and MNIST digit classification. 
\end{abstract}

%

\keywords{reinforcement learning; partially observability; information gain}  

\maketitle
\section{Introduction}

To act intelligently, an agent must be able to reason about its uncertainty over certain variables in its environment. \emph{Active perception} \citep{Bajcsy88,Bajcsy16} is the ability of an agent to reason about its uncertainty and take actions to reduce it. The aim of the agent is to take actions, to collect observations, that help it predict the value of an unknown\footnote{We use the term unknown variable instead of hidden variable, because we assume that we have access to this unknown variable during training, as is standard in supervised learning. A hidden variable, on the other hand, is never available.} variable, say $\yvar$ at each time step $t$. For example, consider the sensor selection task \citep{Hero11, SatsangiJournal17}, where an agent has access to a set of available sensors to infer the unknown position of a person in a shopping mall ($\yvar$). At each time step $t$, due to resource constraints, the agent must select a subset of the sensors from which to collect the observations. Another example is the visual attention task \citep{Mnih14}, where an agent must sequentially attend to parts of an image to determine if an object is present  ($\yvar = 1$ or 0). 

The problem of taking informative actions---or selecting informative observations---to minimize (future) uncertainty can be formulated as a reinforcement learning problem. The agent takes actions and receives rewards for reducing uncertainty. The key question is how to compute such rewards. The most straightforward approach is as follows. 
At each time step, the agent maintains a probability distribution over the unknown variable $\yvar$. 
The agent takes actions $a^t$ to collect observations (denoted by $z$) about this unknown variable. The agent can then update its probability distribution over the unknown variable $p^{t+1}(\yvar) = \Pr(\yvar| z^1, z^2, \dots, z^{t+1}, a^0, a^1, \dots, a^{t})$. 
The reward corresponds to expected reduction in uncertainty, after taking an action. 
A common definition for reduction in uncertainty is the expected \textit{information gain} \citep{Krause05uai}: $\mathbb{E}_{\Pr(z^{t+1}|p^{t},a)}[H(p^t) - H(p^{t+1})]$, where $H(p^t) = - \sum_{\yvar \in Y} (p^t(\yvar) \log (p^t(\yvar))$ is the entropy of the the probability distribution $p^t$. The expectation is over the possible observations $z^{t+1}$ if the agent takes action $a$. 

Unfortunately, computing these rewards can be prohibitively expensive. 
Given a model of the world---the conditional probability distributions $\Pr(z^{t+1}| \yvar^{0:t+1}, a^{0:t})$ and $\Pr(\yvar^{t+1}|\yvar^{0:t},a^{0:t})$---the agent can perform explicit belief inference to exactly compute the information gain of taking an action and so compute the action that maximizes it \citep{Krause05uai, SatsangiJournal17}. Such models must be either manually specified, or learned if a dataset is available, which requires substantial expert knowledge and significant human effort. Even when a model of the world is available, performing explicit belief inference can be expensive or even intractable.
In such cases, approximate belief inference methods such as particle filters \citep{Doucet09} or variational approximation \cite{Igl18} must be used to compute the information gain.

In this paper we present a simple model-free reinforcement learning approach that allows an agent to take actions that maximize its information gain \emph{without} performing explicit belief inference. We start by presenting a simple insight that shows that any convex function of the belief of an agent (about an unknown variable) can be approximated simply by using prediction rewards, for example, +1 for a correct prediction and 0 for an incorrect prediction. 
Given an arbitrary prediction reward, we establish the exact error bounds the agent would incur for acting greedily with respect to the given prediction reward in comparison to actions that maximize the information gain of the agent. We show that in principle the prediction rewards can be designed to optimize this error.

The practice of providing an agent with prediction rewards is common in sub-fields such as visual attention \citep{Mnih14}, question answering systems \cite{Nara16} and intrinsic motivation \citep{Pathak17}; this work provides theoretical motivation for these strategies and further generalizes the types of rewards and prediction problems that can be considered. Furthermore, the framework put forth unifies disparate areas that are in fact working on similar approaches, namely the fields already using prediction rewards and fields where it is common to maximize information gain, including active perception \citep{SatsangiJournal17}, active sensing \citep{Kreucher05} and sensor placement \citep{Krause05uai}. 

We use the our theoretical result to develop \emph{deep anticipatory networks} (DANs) as a principled framework to leverage the power of deep RL to minimize uncertainty without performing explicit belief inference. A DAN consists of two neural networks: a $\mathsf{Q}$ network that selects sensory actions and a model, $\mathsf{M}$ network that predicts the state of the world based on the observations generated by those sensory actions. The main idea behind DAN is to train the $\mathsf{Q}$ network and $\mathsf{M}$ network simultaneously:  the $\mathsf{Q}$ network learns a $Q$-function that estimates how much each sensory action would help the $\mathsf{M}$ network to predict the current state. Given some ground truth data, the $\mathsf{M}$ network learns to predict the current state in a supervised way, given the observations generated by the sensory actions that were selected according to the $Q$-network. 

Finally, we empirically test our algorithm in two settings: sensor selection and attention. 
We build a sensor selection system for tracking people that scales to a large number of people. 
Using DAN we learn a policy for sensor selection and we show its performance on test data (when deployed) in comparison to other baselines that reward the agent using a heuristic that is based on the coverage of the sensor.  We also apply DAN to a visual attention task where an agent must predict an MNIST class given only a partial observation of it. Our experiments on the MNIST \citep{Lecun98} and fashion MNIST \cite{Xiao17} datasets show that formulating the visual attention tasks as a continual problem where the agent is rewarded throughout the episode is superior to the terminal reward formulation common in the literature. 

\section{Problem Setting}
We model the world as a partially observable Markov decision process (POMDP) \citep{Kaelbling98} with finite state, action and observation space. At each time step $t$, the environment is in hidden state $s \in \stateset$, the agent takes an action $a \in \actionset$ and the environment transitions to a new state $s' \in \stateset$. Additionally, the agent receives an observation $z \in \Omega$ that is correlated with a target variable $\yvar \in \yset = \{1, 2, \dots, \numy\}$ that is a function of $s$, $\yvar = I(s)$. 

The aim of the agent is to predict the target correctly on each step. At each time step, the agent maintains a probability distribution over $\yvar$ given the previous actions and observations, \\ $\Pr(\yvar | z^{t}, z^{t-1} \dots z^{1}, a^{t-1}, a^{t-2} \dots a^{0} )$. After taking action $a^{t}$ and receiving observation $z^{t+1}$, the agent can update the probability distribution $\Pr(\yvar | z^{t+1}, z^{t}, \dots, z^{1}, a^{t}, a^{t-1}, \dots a^{0})$ using the Bayes rule. This has been formalized as a $\rho$POMDP \cite{Araya10} where the reward is defined as the negative entropy of the probability distribution over $\yvar$. This formulation, however, requires access to the true probability distributions of the POMDP. Instead, we only assume access to a labelled dataset for training, where for a sequence of observations we are given the corresponding targets. 
For a sensor selection task, such a dataset can be obtained by investing a one-time effort to collect and label sets of observations, without inferring or knowing anything about hidden states or the underlying probabilities.

\section{A connection between Information gain and prediction rewards}
In this section we provide a bound between the negative entropy and prediction rewards, which correspond to rewarding the agent for correct predictions of the target variable. In particular, we show that prediction rewards provide a set of tangents that form a lower-bound to the negative entropy. We discuss at the end of the section how this implies that maximizing expected prediction rewards---as is done by a reinforcement learning agent---provides an effective proxy to maximizing expected information gain. We first provide an informal theorem statement, and then introduce the required notation to prove the main results. 

Let $\mathbf{b} = (b_1, b_2, \dots b_{\numy})$ denote a probability vector in an $\numy$ dimensional vector space such that $\sum_{i \in \{1, 2, \dots \numy \}} b_{i} = 1$ ($Y = \{1, 2, \dots, \numy\}$), and let $H(\mathbf{b})$ be the Shannon entropy defined by $H(\mathbf{b}) = - \sum_{i \in Y} b_i \log b_i$. The vector $\mathbf{b}$ corresponds to the agents prediction about the what target variable is most probable, given the history of observations. The goal of the agent is to select actions to maximize information gain, and so decrease the entropy of the probabilities $\mathbf{b}$: maximize the negative entropy. We can instead consider maximizing an expected 0-1 prediction reward for the most probable class, $\max_i b_i$. 

\vspace{0.1cm} 
\noindent
\textbf{Informal Theorem Statement:}
The difference between the negative entropy $-H(\mathbf{b})$ and the expected 0-1 prediction reward $\max_i b_i$ (shifted by the a constant that is the same on every step) is upper bounded by $ -1 + \log(e + \numy -1)$. 

\subsection{Main Theoretical Result}

Let $\rho(\mathbf{b})$ be any convex function of the probabilities $\mathbf{b}$, such as $\rho(\mathbf{b}) = -H(\mathbf{b})$.
The equation of a tangent plane to $\rho$  is given by: $\langle \mathbf{b}, \triangledown \rho(\mathbf{b_0}) \rangle + c_{\mathbf{b}_0}$, where $c_\mathbf{b_0}$ is a constant and $\triangledown \rho(\mathbf{b}_0)$ is the gradient of $\rho$. Though generically complex to compute, $c_{\mathbf{b}_0}$ can be computed analytically for certain functions, using Fenchel conjugates (see \citet*{boyd04} for a comprehensive introduction).  
Here, we describe the two most relevant properties for this paper: 

\begin{table}
\caption{Summary of notation}\label{wrap-tab:1}
\begin{tabular}{c c}\toprule  
$\hat{\yvar}$ & a random variable that denotes a prediction  \\ \midrule
\shortstack{$h^{t}$ \\ \newline} & \shortstack{ the action ($a$)-observation($z$) history \\ $h^{t} = \langle a^0, z^1, a^1, \dots, a^{t-1}, z^{t} \rangle$}. \\ \midrule
$\mathbf{b} $  & denotes a probability vector \\  \midrule
$\rho(\mathbf{b})$  & a convex and differentiable function of $\mathbf{b}$ \\    \midrule
$\rho^{*}(\mathbf{b})$ & the Fenchel conjugate of $\rho(\mathbf{b})$ \\ \midrule
$\triangledown \rho(\mathbf{b})$ &   the gradient of $\rho(\mathbf{b})$  \\ \midrule
$\triangledown \rho(\mathbf{b})_i$ &  the $i^{th}$ entry in the vector $\triangledown \rho(\mathbf{b})$ \\ \midrule
$R(\yvar,\hat{\yvar})$ & the prediction reward function \\ \midrule
\shortstack{$\mathbf{r}_{j}$ \\ \newline} & \shortstack{a reward vector, each entry $r_i$ of $\mathbf{r}_j$ is the \\ scalar reward agent gets for $\hat{\yvar} = j$ when true $\yvar = i$.} \\ \midrule
$\log$ & natural logarithm \\ \midrule
\end{tabular}
\end{table}

\textbf{Property 1}: If $\rho(\mathbf{b})$ is convex, closed and differentiable, then $c_{\mathbf{b}_0}$ is the negative of the \emph{Fenchel conjugate}  of $\rho(\mathbf{b})$ at $\triangledown \rho(\mathbf{b}_0)$, that is, $c_{\mathbf{b}_0} = - \rho^*(\triangledown \rho(\mathbf{b}_0))$, where $\rho^{*}$ denotes Fenchel conjugate of $\rho$ \citep{bauschke12,boyd04}. 

\textbf{Property 2}: The Fenchel conjugate of the negative entropy is the log-sum-exp function, $\log(\sum_{i} e^{x_i})$ \cite[Page 93]{boyd04}.  

Property 1 and 2 give that for $\rho(\mathbf{b}) = -H(\mathbf{b})$,  the constant term is $c_{\mathbf{b}_0} = - \log(\sum_{i=1}^{n} e^{\triangledown \rho(\mathbf{b}_0)_i})$, where $\triangledown \rho(\mathbf{b}_0)_i$ denotes the $i^{th}$ entry in the vector $\triangledown \rho(\mathbf{b}_0)$.
Now, let $\hat{\yvar} \in Y = \{1,2,3 \dots, \numy\}$ denote a prediction that is input to a reward function $R(\yvar,\hat{\yvar})$, which gives a scalar value $r_{i, j}$  for each combination of $i,j \in Y $. 
Let $R(\yvar,\hat{\yvar}=j)$, the reward vector associated with predicting $\yvar$ as $j$ using $\hat{\yvar}$ be denoted by the vector $\mathbf{r}_{j}$. That is, each entry $r_i$ in $\mathbf{r}_j$ is the reward for predicting $\hat{\yvar}$ as $j$ when the true value of $\yvar$ is $i$.
Given a probability vector $\mathbf{b}$, the expected reward for assigning $\hat{\yvar}=j$ is 
\begin{equation}
\rho'(\mathbf{b},\hat{\yvar}=j) = \langle \mathbf{b} , \mathbf{r}_j \rangle = \sum_{i \in Y} b_i r_{i,j},
\end{equation}
which leads to the following lemma.

\begin{lemma} \label{eq:tangent}
If $\rho$ is a closed, convex and differentiable function of $\mathbf{b}$ and $\mathbf{r}_j$ is in the set of all possible values of the gradients of $\rho$ then $\rho'(\mathbf{b},j) - \rho^*(\mathbf{r}_j) =  \langle \mathbf{b} , \mathbf{r}_j \rangle - \rho^*(\mathbf{r}_j)$ is a tangent to the curve $\rho(\mathbf{b})$ at $\mathbf{b_0}$ that satisfies $\triangledown \rho(\mathbf{b_0}) = \mathbf{r}_j$ for any fixed $j \in Y$. 
\end{lemma}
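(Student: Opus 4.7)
The plan is to show that the affine map $L(\mathbf{b}) = \langle \mathbf{b}, \mathbf{r}_j \rangle - \rho^{*}(\mathbf{r}_j)$ agrees with $\rho$ at the point $\mathbf{b}_0$ where $\triangledown \rho(\mathbf{b}_0) = \mathbf{r}_j$, and that its slope coincides with $\triangledown \rho(\mathbf{b}_0)$; these two properties together say exactly that $L$ is the tangent to $\rho$ at $\mathbf{b}_0$.

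First, I would recall the definition of the Fenchel conjugate,
\begin{equation*}
\rho^{*}(\mathbf{y}) \;=\; \sup_{\mathbf{b}}\bigl\{\langle \mathbf{b}, \mathbf{y}\rangle - \rho(\mathbf{b})\bigr\}.
\end{equation*}
Because $\rho$ is closed, convex and differentiable, and because the lemma assumes $\mathbf{r}_j$ lies in the range of $\triangledown \rho$, there exists $\mathbf{b}_0$ with $\triangledown \rho(\mathbf{b}_0) = \mathbf{r}_j$. Applying the first-order optimality condition to the concave map $\mathbf{b} \mapsto \langle \mathbf{b}, \mathbf{r}_j\rangle - \rho(\mathbf{b})$ shows that the supremum above is attained at this $\mathbf{b}_0$, so
\begin{equation*}
\rho^{*}(\mathbf{r}_j) \;=\; \langle \mathbf{b}_0, \mathbf{r}_j\rangle - \rho(\mathbf{b}_0),
\end{equation*}
equivalently $\rho(\mathbf{b}_0) = \langle \mathbf{b}_0, \mathbf{r}_j\rangle - \rho^{*}(\mathbf{r}_j) = L(\mathbf{b}_0)$.

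To conclude, I would observe two things: (i) $L$ and $\rho$ take the same value at $\mathbf{b}_0$ by the display above, and (ii) the gradient of $L$ is identically $\mathbf{r}_j = \triangledown \rho(\mathbf{b}_0)$. Together these say that $L$ matches both the value and the first-order behavior of $\rho$ at $\mathbf{b}_0$, which is exactly the definition of the tangent hyperplane to the convex surface $\rho$ at $\mathbf{b}_0$. Recognizing $\langle \mathbf{b}, \mathbf{r}_j\rangle$ as $\rho'(\mathbf{b},j)$ from the preceding displayed equation yields the statement. (As a side remark, Fenchel--Young gives $L(\mathbf{b}) \le \rho(\mathbf{b})$ globally, so this tangent is in fact a supporting hyperplane, which is what later sections will exploit to lower-bound $-H$.)

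The only delicate point, and the one I would think of as the main obstacle, is ensuring that such a $\mathbf{b}_0$ exists: on the probability simplex, not every vector $\mathbf{r}_j \in \mathbb{R}^{\numy}$ is realized as $\triangledown \rho$ of some interior point. This is precisely why the hypothesis ``$\mathbf{r}_j$ is in the set of all possible values of the gradients of $\rho$'' is included; without it the supremum defining $\rho^{*}(\mathbf{r}_j)$ need not be attained and the tangent interpretation could fail at the boundary. Granted that assumption, the proof reduces to the short computation outlined above.
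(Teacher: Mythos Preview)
Your proposal is correct and follows essentially the same approach as the paper: both identify $\langle \mathbf{b},\mathbf{r}_j\rangle - \rho^*(\mathbf{r}_j)$ with the tangent hyperplane at the point $\mathbf{b}_0$ where $\triangledown\rho(\mathbf{b}_0)=\mathbf{r}_j$, and both flag the range-of-gradient hypothesis as what guarantees such a $\mathbf{b}_0$ exists. The only difference is packaging: the paper simply invokes Property~1 (that $c_{\mathbf{b}_0}=-\rho^*(\triangledown\rho(\mathbf{b}_0))$) as a cited fact and substitutes, whereas you unpack that property from the definition of $\rho^*$ and the first-order optimality condition---so your argument is a more self-contained derivation of the same thing.
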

\begin{proof}
Property 1 imply that the equation of a tangent to the curve $\rho(\mathbf{b})$ is $\langle \mathbf{b} , \triangledown \rho(\mathbf{b_0}) \rangle - \rho^*(\triangledown \rho(\mathbf{b_0}))$. If $\mathbf{r}_{j} = \triangledown \rho(\mathbf{b_0})$ then $\langle \mathbf{b} , \mathbf{r}_j \rangle - \rho^*(\mathbf{r}_j)$ is a tangent to the curve $\rho(\mathbf{b})$. The condition that $\mathbf{r}_j$ is in the set of all possible values of gradients of $\rho$ is required for $\rho^*$ to be defined (and for $\triangledown \rho(\mathbf{b_0}) = \mathbf{r}_j$ to have a solution).
\end{proof}

We can use this lemma to show that the maximum over these tangent planes forms a lower bound on $\rho(\mathbf{b})$. When $\rho$ is the negative entropy, this maximum over tangent planes precisely corresponds to the expected prediction reward, shifted by a constant as shown in Theorem \ref{th:main}.
\begin{proposition} \label{lem:tightbound}
If $\rho$ is a closed, convex, and differentiable function of $\mathbf{b}$ and $\mathbf{r}_j$ is in the set of all possible values of the gradients of $\rho$ then the maximum error between $\rho(\mathbf{b})$ and $\rho'(\mathbf{b}) \triangleq \max_{\hat{\yvar} \in Y} (\langle \mathbf{b} , \mathbf{r}_{\hat{\yvar}} \rangle - \rho^*(\mathbf{r}_{\hat{\yvar}}) )$ is bounded and positive for $\mathbf{b} \in $ dom $\rho$.
\end{proposition}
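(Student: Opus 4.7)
The plan is to prove the two claims (positivity and boundedness of the error $\rho(\mathbf{b}) - \rho'(\mathbf{b})$) separately, each as a short consequence of Lemma~\ref{eq:tangent} and standard facts from convex analysis.

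First, I would handle positivity. By Lemma~\ref{eq:tangent}, for each $j \in Y$ the affine function $\mathbf{b} \mapsto \langle \mathbf{b}, \mathbf{r}_j\rangle - \rho^*(\mathbf{r}_j)$ is a tangent plane to $\rho$ at some point $\mathbf{b}_0^{(j)}$ satisfying $\triangledown \rho(\mathbf{b}_0^{(j)}) = \mathbf{r}_j$. Since $\rho$ is convex, every tangent plane is a global underestimator, i.e.,
\begin{equation*}
\rho(\mathbf{b}) \ \geq\ \langle \mathbf{b}, \mathbf{r}_j\rangle - \rho^*(\mathbf{r}_j) \qquad \text{for all } \mathbf{b} \in \operatorname{dom} \rho,\ j \in Y.
\end{equation*}
Taking the maximum over $j \in Y$ on the right-hand side preserves the inequality, yielding $\rho(\mathbf{b}) \geq \rho'(\mathbf{b})$, and hence $\rho(\mathbf{b}) - \rho'(\mathbf{b}) \geq 0$ on $\operatorname{dom}\rho$. (Equality holds at each $\mathbf{b}_0^{(j)}$, so the bound is tight at those points.)

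Next, for boundedness I would argue by continuity on a compact set. Since $|Y| = \numy$ is finite, $\rho'$ is the pointwise maximum of $\numy$ affine functions and is therefore continuous on $\mathbb{R}^{\numy}$. The relevant domain for $\mathbf{b}$ is the probability simplex $\Delta_{\numy} = \{\mathbf{b} \in \mathbb{R}^{\numy}_{\geq 0} : \sum_i b_i = 1\}$, which is compact. For the cases of interest (in particular $\rho = -H$ with the convention $0\log 0 = 0$), $\rho$ extends continuously to all of $\Delta_{\numy}$. Hence $\rho - \rho'$ is a continuous function on a compact set and attains a finite maximum, giving the desired upper bound on the error.

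The main subtlety, and the only place I expect to need care, is the domain: if one insists on $\operatorname{dom}\rho$ being the \emph{open} simplex (to keep $\triangledown \rho$ finite), then one must check that the error does not blow up near the boundary. For negative entropy this is immediate because $\rho$ extends continuously to the closed simplex and $\rho'$ is continuous everywhere, so the supremum on the open simplex agrees with the (finite) maximum on its closure. A tighter, dimension-dependent constant—matching the $-1 + \log(e + \numy - 1)$ quoted in the informal statement—would then be obtained in the sequel by optimizing over a specific family of reward vectors $\{\mathbf{r}_j\}_{j \in Y}$, but at the level of this proposition it suffices to conclude that the error is nonnegative and bounded above.
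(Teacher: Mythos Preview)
Your proposal is correct. The positivity argument is identical to the paper's: both invoke Lemma~\ref{eq:tangent} to identify each $\langle \mathbf{b},\mathbf{r}_j\rangle - \rho^*(\mathbf{r}_j)$ as a tangent plane and then use convexity to conclude that the maximum of finitely many tangents lower-bounds $\rho$.

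For boundedness, however, you take a genuinely different route. The paper argues structurally: on each region where a single tangent achieves the max in $\rho'$, the error $\rho-\rho'$ is convex (convex minus affine), so its maximum over that region must lie on the region's boundary---either an intersection of two tangents or an extreme point of $\operatorname{dom}\rho$---and is therefore finite. Your argument instead appeals to continuity of $\rho-\rho'$ on the (compact) closed simplex, handling the open-versus-closed simplex issue by continuous extension of $-H$. Your version is cleaner and more standard analysis; the paper's version, while terser and less explicitly justified, has the advantage of pinpointing \emph{where} the maximum error can occur, which is precisely the information exploited in the subsequent Theorem~\ref{th:main} to derive the explicit constant $-1+\log(e+\numy-1)$.
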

\begin{proof}
Since $\rho'(\mathbf{b})$ is the maximum over a family of tangents to a convex function $\rho(\mathbf{b})$ it is guaranteed to be a lower bound to $\rho(\mathbf{b})$. Furthermore, if $\rho'(\mathbf{b})$ is defined for $\mathbf{b} \in $ dom $\rho$ then this error is maximal either at one of the intersection points of the tangents or at the extreme points of the domain of $\mathbf{b}$. In both cases it is finite and positive and can be calculated exactly for given values of $\mathbf{r}_j$ and definition of $\rho(\mathbf{b})$.
\end{proof}

The above proposition bounds the error between a convex function and prediction rewards using its Fenchel conjugate. The Fenchel conjugate is known for several convex functions such as negative entropy (see Property 2), KL-divergence, and $\chi^2$-divergence. Given an arbitrary prediction reward, we can derive exactly how well it approximates a given convex function, such as, negative entropy.

In the rest of this section we perform this analysis for the case where $\rho(\mathbf{b})$ is the negative belief entropy. 
We restrict ourselves to the common reward functions where the agent is rewarded with $r'$ for correctly predicting $\yvar$ and penalized with $r''$ (or not rewarded $r'' = 0$) otherwise, with $r' \geq r''$
\begin{align} \label{eq:rew}
R(\yvar,\hat{\yvar}) &= 
\left\{ \begin{array}{ll}
         r' & \mbox{if $\yvar=\hat{\yvar}, \forall \yvar,\hat{\yvar} \in Y $};\\
        r'' & \mbox{otherwise}.\end{array} \right. 
\end{align}

Using Proposition \ref{lem:tightbound} the difference between $\rho(\mathbf{b})$ and $\rho'(\mathbf{b})$ can be quantified as: 
\begin{equation}
\rho(\mathbf{b}) - \rho'(\mathbf{b}) = -H(\mathbf{b}) - \max_{j \in Y}( \langle \mathbf{b} \mathbf{r}_j \rangle - \rho^{*}(\mathbf{r}_{j}) )
\end{equation}
For the reward defined in \eqref{eq:rew}, $\mathbf{r}_1$ is the vector $(r', r'', r'', \dots, r'')$, $\mathbf{r}_2$ is the vector $(r'', r', r'', \dots, r'')$ and so on. We start by observing that $\rho^*(\mathbf{r}_j)$ is a constant term independent of $j$ and it evaluates to: $\rho^*(\mathbf{r}_1) = \rho^*(\mathbf{r}_2) = \dots = \rho^*(\mathbf{r}_{\numy}) = \log(e^{r'} + (\numy - 1) e^{r''})$.
The term $\max_{j \in Y} \langle \mathbf{b} \mathbf{r}_j \rangle$ can be simplified as max over the following terms  $ \{ (b_1 r' + b_2 r'' + \dots b_{\numy} r'') , (b_1 r'' + b_2 r' + \dots b_{\numy} r'')  , \dots ,  (b_1 r'' + b_2 r'' + \dots b_{\numy} r' )   \}$. Since $b_1 + b_2 + \dots b_{\numy} = 1$ and since $r' > r''$, the maximum over these aforementioned terms is simply equal to:
$\max_{j \in Y} \langle \mathbf{b} ,\mathbf{r}_j   \rangle  = r' \max_{i \in Y} b_i + r'' (1 - \max_{i \in Y} b_i)$.

Using above simplifications $\rho'$ can be written as:
\begin{equation} \label{eq:rho'}
\rho'(\mathbf{b}) = (r' - r'') \max_{i \in Y} b_i + r'' - \log(e^{r'} + (\numy - 1)e^{r''}),
\end{equation}
and the difference between $\rho(\mathbf{b}) - \rho'(\mathbf{b})$ can be characterized as:
\begin{equation} \label{eq:errorChar}
\!\!\!\rho(\mathbf{b}) - \rho'(\mathbf{b}) = \!-H(\mathbf{b}) - (r' - r'') \max_{i \in Y} b_i - r'' \!+ \log(e^{r'} \!\!+ (\numy - 1)e^{r''}). 
\end{equation}

This equation provides the exact error from using the tangents, rather than the negative entropy, and can be queried for a specific $\mathbf{b}$ to provide insights into the level of approximation. We can, however, also  bound this difference for all $\mathbf{b}$, as given in the next theorem.

\begin{theorem}  \label{th:main}
Let $m = r' - r''$ and let\footnote{We can get bounds for $m < 1$ and $m > \numy$, but this introduces more cases and reduces the clarity of the result. We focus the result for the most common $m$. } $1 \leq m \leq \numy$. For every $\mathbf{b} \in [0,1]^{\numy} \mbox{s.t.} \sum_{i \in Y} b_i = 1$,  
\begin{align*}
\rho(\mathbf{b}) &- \rho'(\mathbf{b}) \leq  \max \{\epsilon_1, \epsilon_2 \} + - r'' +\log(e^{r'} +(\numy - 1)e^{r''}) \\
\text{where } \ \  \epsilon_1 &= \log\left(\tfrac{1}{r' - r''} \right) - 1,  \mbox{ and } \ \ 
\epsilon_2 = \log\left(\tfrac{1}{\numy}\right) - \frac{(r' - r'')}{\numy}. 
\end{align*}
\end{theorem}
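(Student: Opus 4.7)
My approach is to strip the theorem down to a one-dimensional question about $p := \max_i b_i$ and close it with a short concavity argument. The first move is to apply equation \eqref{eq:errorChar}: both sides of the claimed inequality contain the same additive constant $-r'' + \log(e^{r'} + (\numy - 1)e^{r''})$, so it suffices to prove the $\mathbf{b}$-dependent inequality $f(\mathbf{b}) := -H(\mathbf{b}) - m\,p \leq \max\{\epsilon_1, \epsilon_2\}$ uniformly over the simplex $\{\mathbf{b} \in [0,1]^{\numy} : \sum_i b_i = 1\}$.

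The second step replaces the dependence on the full vector $\mathbf{b}$ by a function of the scalar $p$ alone. Because $b_i \leq p$ for every $i$ and $\log$ is monotone, $b_i \log b_i \leq b_i \log p$; summing over $i$ and using $\sum_i b_i = 1$ yields $-H(\mathbf{b}) = \sum_i b_i \log b_i \leq \log p$. Hence $f(\mathbf{b}) \leq g(p)$ with $g(p) := \log p - m p$, and the task reduces to bounding $g$ on the interval $[1/\numy,\,1]$, which is the range of $\max_i b_i$ on the simplex.

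The final step is calculus on $g$. Computing $g'(p) = 1/p - m$ and $g''(p) = -1/p^2 < 0$ shows that $g$ is strictly concave with a unique critical point $p^\star = 1/m$, and the hypothesis $1 \leq m \leq \numy$ places $p^\star$ inside $[1/\numy,\,1]$, giving $g(p^\star) = -\log m - 1 = \epsilon_1$. The two endpoints contribute $g(1/\numy) = -\log \numy - m/\numy = \epsilon_2$ and $g(1) = -m$. The only loose end is to verify that the $p=1$ boundary cannot beat the claimed bound: the elementary inequality $m \geq 1 + \log m$ for $m \geq 1$ gives $-m \leq -\log m - 1 = \epsilon_1$, so $g(p) \leq \max\{\epsilon_1, \epsilon_2\}$ on the admissible interval, which is exactly what the theorem asserts.

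I expect the only genuinely delicate point to be the bookkeeping: one must show the chain $f(\mathbf{b}) \leq g(p) \leq \max\{\epsilon_1, \epsilon_2\}$ and also check that the $p = 1$ endpoint does not introduce a third quantity $-m$ into the maximum. Everything else (the monotonicity bound on $-H$, the concavity of $g$, and locating $p^\star$) is routine once the reduction from $\mathbf{b}$ to $p$ is in place.
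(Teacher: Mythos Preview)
Your proof is correct and takes a genuinely simpler route than the paper's. The key difference is in how you bound $-H(\mathbf{b})$ for a fixed $p = \max_i b_i$. The paper maximizes $-H(\mathbf{b})$ \emph{exactly} under the constraint $b_i \le p$: it argues that mass must be packed onto as few coordinates as possible, yielding $k$ copies of $b_1$ plus a remainder $1-kb_1$, and then analyzes the two-parameter function $g(b_1)=kb_1\log b_1+(1-kb_1)\log(1-kb_1)-mb_1$. This function is \emph{convex} in $b_1$, so it is maximal at the endpoints $b_1=1/\numy$ or $b_1=1/k$; the latter case then requires a second optimization over the (continuous) parameter $k$, producing $\epsilon_1$ at $k=m$. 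You bypass all of this with the one-line inequality $-H(\mathbf{b})=\sum_i b_i\log b_i\le\sum_i b_i\log p=\log p$, which gives a \emph{concave} one-variable function $g(p)=\log p-mp$ whose unique interior maximizer $p^\star=1/m$ immediately delivers $\epsilon_1$. The paper's $f(k)=\log(1/k)-m/k$ is exactly your $g$ evaluated at $p=1/k$, so the two analyses converge, but yours gets there without the auxiliary integer $k$ or the convexity/endpoint detour.

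Two minor remarks on your write-up. First, your check that $g(1)=-m\le\epsilon_1$ is unnecessary: once you have established that $g$ is concave with its global maximizer $p^\star=1/m\in[1/\numy,1]$, the endpoint values are automatically dominated by $g(p^\star)=\epsilon_1$. Second, your argument actually shows $f(\mathbf{b})\le\epsilon_1$ outright, which is slightly stronger than the stated $\max\{\epsilon_1,\epsilon_2\}$; indeed $\epsilon_1\ge\epsilon_2$ for all $1\le m\le\numy$ (set $t=m/\numy$ and note $-\log t+t-1\ge 0$), so the $\epsilon_2$ term in the theorem is never the binding one under the stated hypothesis.
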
 
\begin{proof}

Starting from \eqref{eq:errorChar}, \\
$\rho(\mathbf{b}) - \rho'(\mathbf{b}) = \!-H(\mathbf{b})-(r' - r'') \max_{i \in Y} b_i - r'' \!+ \log(e^{r'} \!\!+ (\numy - 1)e^{r''})$.
Wlog, let $b_1 = \max_{i \in Y}b_i$, then 
\begin{equation} \label{eq:errorinb1}
\!\!\!\rho(\mathbf{b}) - \rho'(\mathbf{b}) = \!-H(\mathbf{b}) - (r' - r'') b_1 - r'' \!+ \log(e^{r'} \!\!+ (\numy - 1)e^{r''}). 
\end{equation}
For a fixed maximal element $b_1$, the optimal choice to maximize $-H(\mathbf{b})$ is to concentrate the remaining probability mass on as few elements as possible subject to constraints that $b_i \leq b_1$ for $i \neq 1$ and $i \in Y$. This means setting $b_2 = 1-b_1$ if $b_1 > 0.5$. Of course, $b_1$ might be less than 0.5. In general, for some $k \ge 1$, we set $b_{1:k} = b_1$ and then $b_{k+1} = 1- k b_1$ for the remaining probability. 
The resulting $-H(\mathbf{b}) =  kb_1 \log(b_1) + (1 - kb_1)\log(1 - kb_1) $  upper bounds the negative entropy for any distribution with max element $b_1$. 

For $m \doteq r' - r'' \geq 0$, define
\begin{equation*}
g(b_1) \doteq kb_1 \log(b_1) + (1 - kb_1)\log(1 - kb_1) - m b_1
\end{equation*}
 where $\numy \geq k \geq 1$ and $b_1 \in [\frac{1}{\numy}, \frac{1}{k}]$. Finding $b_1$ that is maximal for $g$ will will be the same $b_1$ that is maximal for the rhs of $\eqref{eq:errorinb1}$ and so give an upper bound on $\rho(\mathbf{b}) - \rho(\mathbf{b}')$. 
Therefore, we only need to find an upper bound on $g(b_1)$ to prove the theorem. 
First, we know that $g(b_1)$ is a convex function for $b_1$ where $\frac{1}{\numy} \leq b_1 \leq \frac{1}{k}$ because 
\begin{align*}
g'(b_1) &= k + k \log(b_1) - k \log(1 - k b_1) - k - m   \\
&= k\log(b_1) -k\log(1 - kb_1) - m,
\end{align*}
and 
\begin{align*}
g''(b_1) &= \frac{k}{b_1} - \frac{k}{1-kb_1}(-k)   \\
& = \frac{k}{b_1} + \frac{k^2}{1 - kb_1} > 0
\end{align*}
Therefore $g(b_1)$ is maximal at the endpoints $b_1 = \frac{1}{\numy}$ or at $b_1 = \frac{1}{k}$, where $\numy \geq k \geq 1$.

If $b_1 = \frac{1}{k}$ ($b_1 \to \frac{1}{k}$ to be more precise), then
\begin{align*}
g\left(b_1=\tfrac{1}{k}\right) = \log\left(\tfrac{1}{k}\right) + 0 - \frac{m}{k}   
\end{align*}
We can again reason about this function, and find the $k$ that makes this maximal and so provides an upper bound on $g$. Let $f(k) \doteq \log\left(\tfrac{1}{k}\right) - \frac{m}{k}$. $f'(k) = -\tfrac{1}{k} + \tfrac{m}{k^2} = 0$ gives $k = m$. Further, for $1 \le m \le \numy$, we know this function is concave for the region $0 \le k \le 2m$ because $f''(k) = \tfrac{1}{k^2} - \tfrac{2m}{k^3} < 0$ if $k \le 2m$. Since this stationary point $k = m$ is in this concave region, we know it is a local maxima. Further, for $k > 2m$, the function becomes convex, but only decreases because there is no stationary points other than $k = m$. Therefore, for this case, the maximal $g$ is 
\begin{equation*}
\epsilon_1 = \log\left(\tfrac{1}{m}\right) -1
.
\end{equation*}

If $b_1 = \frac{1}{\numy}$, then 
\begin{equation*}
\epsilon_2 = g\left(b_1=\tfrac{1}{\numy}\right) = \log\left(\tfrac{1}{\numy}\right) - \frac{m}{\numy}.
\end{equation*} 
Putting it all together, since we found $\max(\epsilon_1,\epsilon_2)$ as an upper bound on $g(b_1)$ for all $b_1$, we get that 
\begin{align*}
\rho(\mathbf{b}) - \rho'(\mathbf{b}) &= g(b_1) - r'' + \log(e^{r'} + (\numy - 1)e^{r''})\\
&\le \max(\epsilon_1, \epsilon_2) - r'' + \log(e^{r'} + (\numy - 1)e^{r''}).
\end{align*}
\end{proof}

\begin{corollary}[0-1 Prediction Rewards]  
If $r' = 1$ and $r'' = 0$, then for every $\mathbf{b} \in [0,1]^\numy$ s.t. $\sum_{i \in Y} b_i = 1$, 
\begin{align*}
\rho(\mathbf{b}) - \rho'(\mathbf{b}) & \leq  -1 + \log(e + \numy - 1) .
\end{align*}
\begin{proof}
Direct application of Theorem \ref{th:main}. Substituting $m = r' - r'' = 1 - 0 = 1$, we get $\epsilon_1 = -1$ and $\epsilon_2 = \log(\frac{1}{\numy}) - \frac{1}{\numy}$. Since 
Since $-1 \geq \log(\frac{1}{\numy}) - \frac{1}{\numy}$ for $\numy \geq 1$, and substituting $r' = 1$ and $r'' = 0$, we get $\rho(\mathbf{b}) - \rho'(\mathbf{b}) \leq  -1 + \log(e + \numy - 1)$.
\end{proof}
\end{corollary}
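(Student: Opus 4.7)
The plan is to treat this as a direct specialization of Theorem \ref{th:main}, so the work reduces to (i) checking the hypothesis $1 \le m \le \numy$, (ii) evaluating $\epsilon_1$ and $\epsilon_2$ at the specific rewards, (iii) identifying which is the maximum, and (iv) simplifying the final $-r'' + \log(e^{r'} + (\numy-1)e^{r''})$ term.

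First I would set $m = r' - r'' = 1 - 0 = 1$. For $\numy \ge 1$ the theorem's hypothesis $1 \le m \le \numy$ is satisfied (with equality on the left), so the theorem applies. Plugging $m=1$ into the two expressions from the theorem yields $\epsilon_1 = \log(1/m) - 1 = -1$ and $\epsilon_2 = \log(1/\numy) - m/\numy = \log(1/\numy) - 1/\numy$.

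Next I would verify that $\epsilon_1 \ge \epsilon_2$ for every $\numy \ge 1$, i.e.\ that $-1 \ge \log(1/\numy) - 1/\numy$, equivalently $\log \numy \ge 1 - 1/\numy$. This is a standard elementary inequality: define $\phi(\numy) = \log \numy - 1 + 1/\numy$, observe $\phi(1) = 0$, and note $\phi'(\numy) = 1/\numy - 1/\numy^2 = (\numy-1)/\numy^2 \ge 0$ for $\numy \ge 1$, so $\phi(\numy) \ge 0$. Hence $\max(\epsilon_1,\epsilon_2) = -1$.

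Finally I would substitute $r' = 1$, $r'' = 0$ into the constant $-r'' + \log(e^{r'} + (\numy-1)e^{r''})$ to get $\log(e + \numy - 1)$, and combine with $\max(\epsilon_1,\epsilon_2) = -1$ to obtain $\rho(\mathbf{b}) - \rho'(\mathbf{b}) \le -1 + \log(e + \numy - 1)$. There is no real obstacle here since Theorem \ref{th:main} does all of the heavy lifting; the only nontrivial step is the elementary comparison $\epsilon_1 \ge \epsilon_2$, and even that has a one-line monotonicity argument.
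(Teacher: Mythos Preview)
Your proposal is correct and follows exactly the same approach as the paper: apply Theorem~\ref{th:main} with $m=1$, compute $\epsilon_1=-1$ and $\epsilon_2=\log(1/\numy)-1/\numy$, argue $\epsilon_1\ge\epsilon_2$, and simplify the constant term. The only difference is that you explicitly verify the hypothesis $1\le m\le\numy$ and supply the one-line monotonicity argument for the inequality $\epsilon_1\ge\epsilon_2$, whereas the paper simply asserts both without justification.
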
 

\subsection{Consequences of the Theory}

\textbf{Computing the optimal action} The previous results showed that $\rho'(\mathbf{b})$ $= \max_{j \in Y} \langle \mathbf{b}, \mathbf{r}_j \rangle - \rho^*(\mathbf{r}_j)$ is an approximation to $\rho(\mathbf{b})$ if $\rho$ is convex. Fortunately, to compute the action $a^{*,t}$ that  maximizes the information gain of the agent we do not need to compute $\rho^*(\mathbf{r}_j)$ as it is independent of the actions and is a constant for a fixed $j = \arg\max_{j \in \stateset} \langle  \mathbf{b}, \mathbf{r}_j \rangle - \rho^{*}(\mathbf{r}_j)$ equal to $\log(e^{r'} + (\numy - 1)e^{r''})$ (for reward defined in \eqref{eq:rew}). The agent can approximate $a^{*,t} = \arg\max_{a \in \actionset}\mathbb{E}[H(p^{t}) - H(p^{t+1})]$ (here $p^{t+1}$ depends on $a$) by picking actions that maximize $\mathbb{E}_{\Pr(z^{t+1}|p^{t}, a)}[\max_{\hat{\yvar} \in Y} \sum_{\yvar} p^{t+1}(\yvar) R(\yvar,\hat{\yvar})]$ or an sample estimate of it. This sample estimate can be computed without maintaining an explicit distribution $p^t$ but instead by training an agent to make correct predictions based on history of action and observations. In the next section we do exactly that.

\begin{figure}
\begin{center}
\includegraphics[scale=0.38]{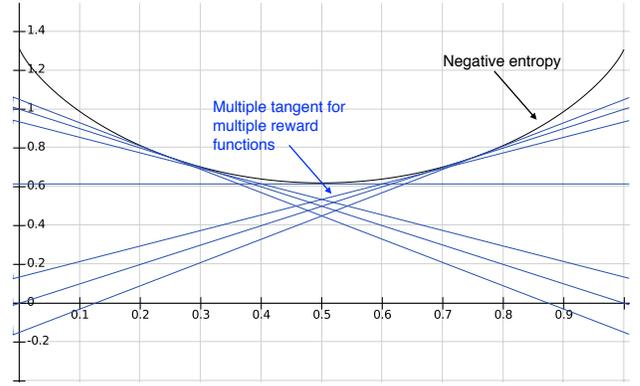}
\end{center}
\caption{Approximation induced by prediction rewards to a translated negative entropy curve.} \label{fig:tangents}
\end{figure}
\textbf{Reducing the error to zero:}  The error between prediction reward and information gain can be further reduced by giving the agent the choice of selecting from one of many prediction variables, each of which defines a separate prediction reward as shown in Figure \ref{fig:tangents}. To do so we define multiple prediction reward $R^{l}(\yvar,\hat{\yvar}^l)$, each of which takes as input a separate prediction variable $\hat{\yvar}$. Furthermore, define $\rho'(\mathbf{b}) = \max_{ \{l, j\} \in \{M \times Y\}} ( \langle \mathbf{b} , \mathbf{r}^{l}_{j} \rangle - \rho^{*}(\mathbf{r}^{l}_{j})    )$, where $M$ is the set of all values $l$ can take (4 in this case). 
Each of these reward functions projects a tangent (or tangent hyperplane) to the original $\rho$, in this case the entropy, with $\hat{\yvar}^{4}$ (corresponding to the blue tangent line parallel to x-axis) being unique in that it rewards the agent equally for correct or incorrect predictions. In this way, $\hat{\yvar}^4$ offers the agent an the option to abstain, which is optimal when it is most uncertain (bottommost point of the negative entropy curve). As more and more tangents are defined using new prediction variables, the upper surface of the tangents can approximate the original $\rho$ more and more closely.

\subsection{Connection to Existing Literature}
An important consequence of this section is that it ties the problem of maximizing information gain \citep{Krause05uai, Nowozin12, Yang16, SatsangiJournal17} to many recent deep RL approaches, that are based on making a correct predictions at the end of an episode \cite{Mnih14, Nara16, Jaderberg16, Oh16, Mousavi19}. For example, both visual attention approaches \citep{Mnih14, Haque16, Mousavi19} and question answering systems \cite{Nara16} train deep RL agents on a 0-1 prediction reward for classifying an image and answering a query correctly respectively. Visual attention, question answering systems, intrinsic motivation, active perception, sensor placement, and active sensing are separate sub-fields of artificial intelligence, that do not necessarily refer to each other very often, however, our results show that they are in fact solving the same problem (or a close approximation of it). 

Our theoretical results are related to $\rho$POMDPs \cite{Araya10} and POMDP-IR \cite{Spaan15} and their equivalence as established in \cite{SatsangiJournal17}. This works shows that given a $\rho$POMDP---which has a reward function defined by a set of vectors that approximate a convex curve---it is possible to design an equivalent POMDP-IR with a prediction reward. However, they do not give any direction as to how to compute the vectors that closely approximate the convex curve. We circumvent the procedure of computing these vectors by using the theory of Fenchel conjugates that gives us direct and analytical expressions for computing the tangent hyperplanes to a convex curve. Consequently, we are able to derive the exact error bound caused by a prediction reward, for example, a 0-1 prediction reward. 

\section{Deep Anticipatory Networks}
The insights in the previous section motivate that we no longer need an explicit belief to evaluate the information gain of an action, and can instead employ existing deep RL algorithms such as deep $Q$-learning to learn a policy that maximizes prediction rewards. In this section we introduce \emph{deep anticipatory networks} (DANs), an algorithm that enables an agent to take actions that help it predict the current and future values of $\yvar$ accurately. DAN consists of two different networks: a $\mathsf{Q}$ network and a model $\mathsf{M}$ network. The $\mathsf{Q}$ network takes as input the  action-observation history $h^{t} = \langle a^0, z^1, a^1, \dots, a^{t-1}, z^{t} \rangle$ of the agent and outputs the $Q$-values of all available actions. The agent takes an action $a^t$ ($t$ denoting the current time step) that maximizes the $Q$-values and receives an observation $z^{t+1}$ that is correlated with the unknown variable $\yvar$ at time step $t+1$. This new action-observation pair is added to the history and fed into the $\mathsf{M}$ network.

The $\mathsf{M}$ network takes as input the agent's action-observation history and predicts the value of the unknown variable.
The $\mathsf{M}$ network is trained in a supervised fashion using the agent's dataset of action-observation histories labelled with the corresponding true $\mathbf{Y}$.  If the $\mathsf{M}$ network predicts the state of the world correctly, then the $\mathsf{Q}$ network is rewarded +1 and otherwise 0. In other words, the $\mathsf{Q}$ network is rewarded for learning a $Q$-function that takes actions that help the model to predict the  state from partial observations. 
Figure \ref{fig:abstractDAN} illustrates an abstract DAN. 

To train DAN, both the $\mathsf{Q}$ and the $\mathsf{M}$ networks are trained simultaneously on small mini-batches of data. Since one of the components in DAN is DQN, we additionally borrow the techniques used to train DQNs to train DAN. Specifically, each history-action pair that the agent encounters is stored in an experience buffer to be sampled later to train both the $\mathsf{Q}$ and the $\mathsf{M}$ networks. We maintain two separate target networks for $\mathsf{Q}$ and $\mathsf{M}$ networks to get stable target values when updating the $\mathsf{Q}$ network.

In each iteration, for each episode, the agent follows the policy that is greedy with respect to the $Q$-values of the $\mathsf{Q}$ network. The accumulated experience is added to the experience buffer in the form of the tuple $\langle h^{t}, a^{t}, r^{t+1}, h^{t+1}, \yvar^{t+1} \rangle$ that is later used to train the $\mathsf{Q}$ network. The observations ${z}^{t+1}$ and the true $\yvar^{t+1}$ are obtained from the dataset while the reward $r^{t+1}$ is obtained from the target $\mathsf{M}$ network. At each time step, the agent samples random experience tuples from the experience buffer and updates $\theta_{\mathsf{Q}}$ using a Q-learning update, with a target network.
Once $\theta_{\mathsf{Q}}$ is updated, $\theta_{\mathsf{M}}$ is updated by gradient descent with a cross-entropy loss: $\theta_{\mathsf{M}} = \theta_{\mathsf{M}} + \alpha 	\nabla_{\theta} L_{\mathsf{M}}(\theta_{\mathsf{M}})$, where $L_{\mathsf{M}}(\theta_{\mathsf{M}}) = \mathtt{cross\hbox{-}entropy}( \mathsf{M}(h^{t}|\theta_{\mathsf{M}}), \yvar)$.

\begin{figure}
\includegraphics[scale=0.47]{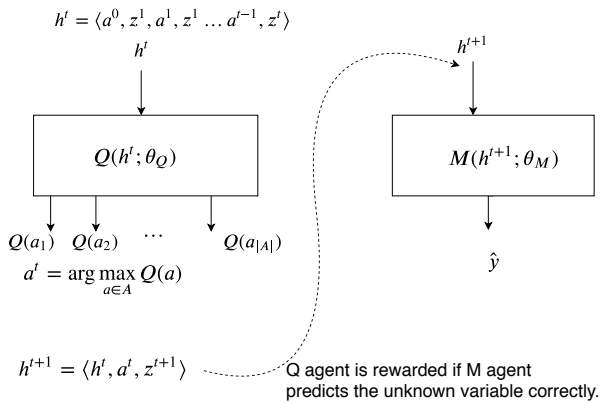}
\caption{An abstract model of DAN that consists of a $\mathsf{Q}$ network and an $\mathsf{M}$ network. The $\mathsf{Q}$ network controls the input to the $\mathsf{M}$ network and the $\mathsf{M}$ network controls the reward the $\mathsf{Q}$ network gets. } \label{fig:abstractDAN}
\end{figure}

The idea of learning sensory actions $(\mathsf{Q})$ and a predictive model ($\mathsf{M}$) simultaneously have appeared in earlier literature, with \citep{Mnih14} the closest of all architectures. Similar architecture are presented in \cite{Haque16, Bachman17,Mousavi19}. The specific architectures in \citep{Mnih14}, \citep{Haque16}, \citep{Bachman17} and \citep{Mousavi19} differ, but they share a common idea: to train the neural network architecture with policy gradient methods on a single unified objective, for example, using REINFORCE \citep{Williams92} or proximal policy optimization \citep{Schulman17}. We chose to use DQN, particularly because it facilitates the use of factorization of the state-space and because we use knowledge of the exact action-values for the sensor selection system. 

Otherwise, this choice is not critically different: either policy gradient methods or Q-learning methods can be used to solve this problem. A more interesting distinction is in the fact that the DAN architecture makes the it clear how general RL problem definitions can be used. It is common to model the problem of classification as a terminal-reward problem where the agent is rewarded only at the end of the episode (after a fixed number of steps). This is applicable when $\yvar$ is not changing with time. We explicitly formulate this problem as a continual problem where the agent is rewarded at each time step if it correctly prediction the unknown variable $\yvar$. Such a formulation is critical when $\yvar$ changes with time, for example, in the sensor selection problem. But even in cases when $\yvar$ does not change with time, our experiments suggests that providing feedback on every step leads to faster learning. This has important implications for training visual attention and question answering systems. 

\section{Experiments} 
In this section we present two different applications of DAN: sensor selection for tracking people in a shopping mall and discrete visual attention for classifying MNIST digits. Code for our experiments is available online.\footnote{\url{https://github.com/sungsulim/DeepAnticipatoryNetworks}}

We apply DAN to build a sensor selection system that we demonstrate can scale to arbitrarily large  spaces. We use DAN to learn a sensor selection policy to track people in a shopping mall.  The problem was extracted from a real-world dataset collected in a shopping mall \cite{Bouma13}. The dataset was gathered over 4 hours using 13 CCTV cameras. Each person's position is represented by $x$-$y$ coordinates, where both $x$ and $y$ take values in the set $\{1, 2, \dots 50\}$ resulting in a total of $50 \times 50$ cells. At each time step, the agent selects one camera out of 10 to get an observation about the location of the person in the image. Each camera covers a subset of $50 \times 50$ cells and provides a noisy observation regarding the position of the person. If the person is not present in the image then a null observation is received. This observation along with the selected camera is passed to the $\mathsf{M}$ network that predicts which of the $50 \times 50 (=2500)$ cells the person occupies.
\subsection{Sensor Selection}
The number of states of the world increases rapidly with the number of people in the scene. To address this, we assume that the movement of a person in the $x$-direction is independent of his/her movement in the $y$-direction and vice-versa. We train two separate DAN architectures, DAN-x and DAN-y for separately predicting the $x$ and $y$ coordinates of the position of a person. Furthermore, we assume that the movement of people present in the scene is independent of each other. These approximations let us build a sensor selection system that can scale to larger spaces and numbers of people. 

For sensor selection, both the $\mathsf{Q}$ and $\mathsf{M}$ networks share an identical architecture: three fully connected layers of output size 60, 30, and 128, followed by a recurrent layer of output size 128, and a final fully connected output layer of size 10 (the number of cameras) and 51 (the number of possible cells + null observation). Strictly speaking, here we are using deep recurrent Q network (DRQN)\citep{Hausknecht15} in the DAN architecture instead of DQN. We use ReLU activation for all fully connected layers except the last, and use L2 weight regularization (scale=0.01). We use the discount factor $\gamma = 0.99$ and perform a double DQN \citep{Hasselt16} update to train $\mathsf{Q}$ network with the Adam optimizer \cite{Kingma14}. 
We also train following baselines for comparison. 
\textbf{Coverage baseline} --- train only the $\mathsf{Q}$ network using the popular state-based reward (i.e., reward the agent for selecting the camera corresponding to the person's current location and getting a positive observation) without the $\mathsf{M}$ net. It uses its observations as final predictions, and during evaluation the agent only has to obtain a positive observation to be considered to have made a correct prediction.
\textbf{Random Policy baseline} --- only train the $\mathsf{M}$ network with a random policy for camera selection. 
\textbf{DAN + Coverage baseline} --- use a combination of DAN reward and coverage reward, in which case the agent is rewarded +1 for correctly predicting the state, +0.2 for not being correct but getting a positive observation, and 0 otherwise (but we still use the $\mathsf{M}$ network to predict the $x$ and $y$ coordinates). \textbf{DAN-shared} is when the Q and M networks share  representations, that is the top layers share the same parameters for both the Q and M network, but the last layer is separated.

\begin{figure}
\begin{center}
\includegraphics[scale=0.28]{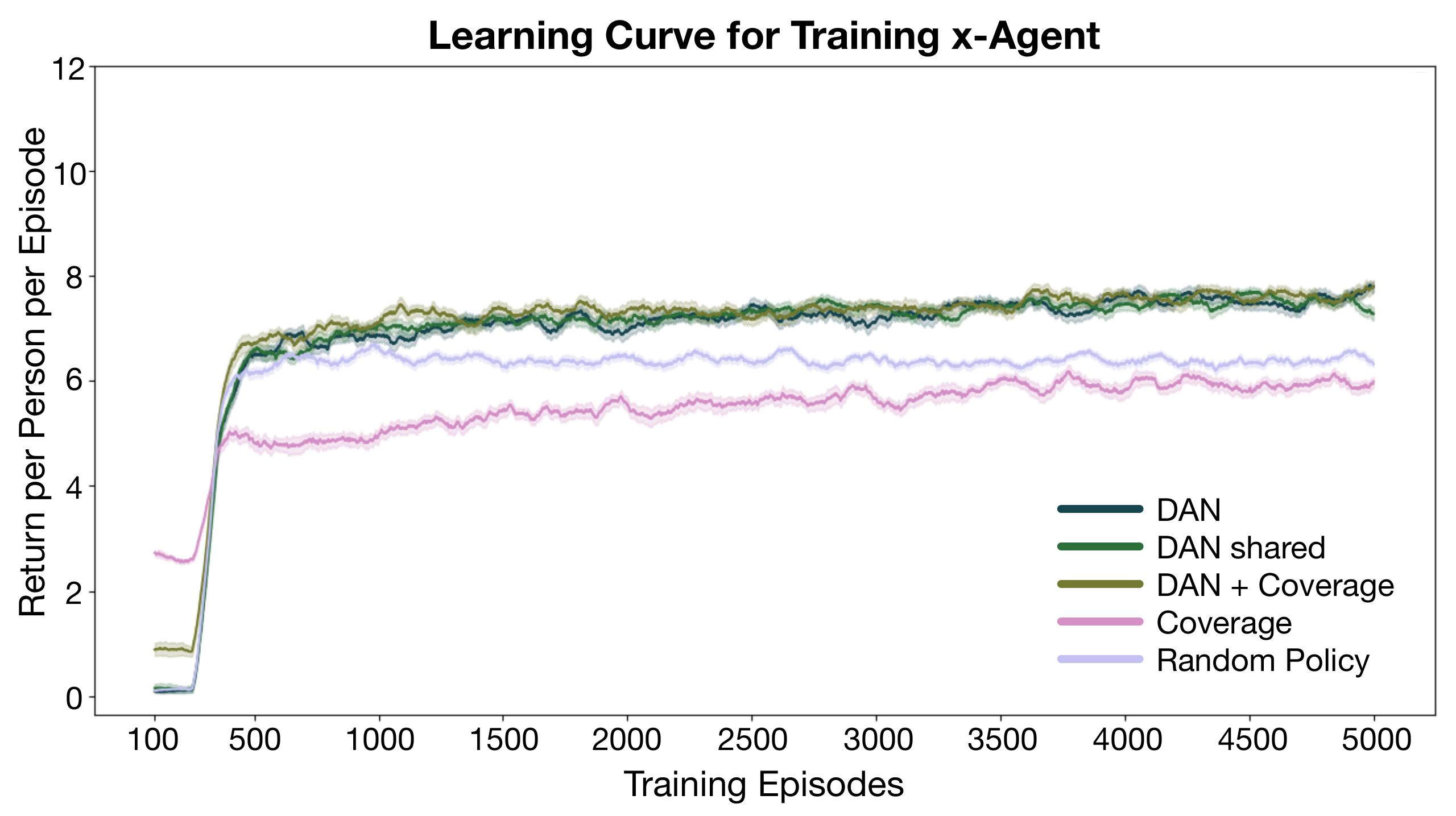}   
\includegraphics[scale=0.28]{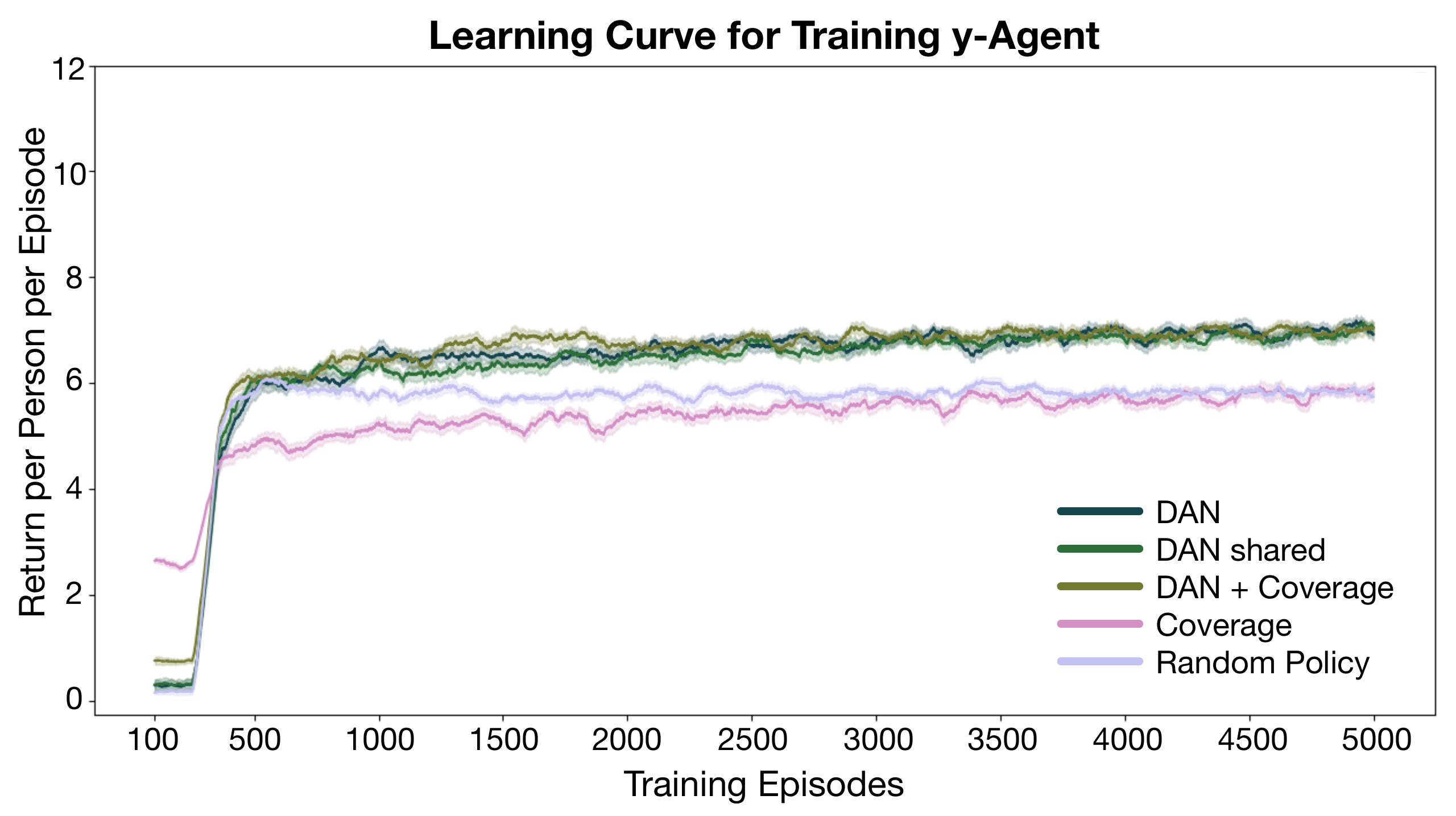}   
\includegraphics[scale=0.28]{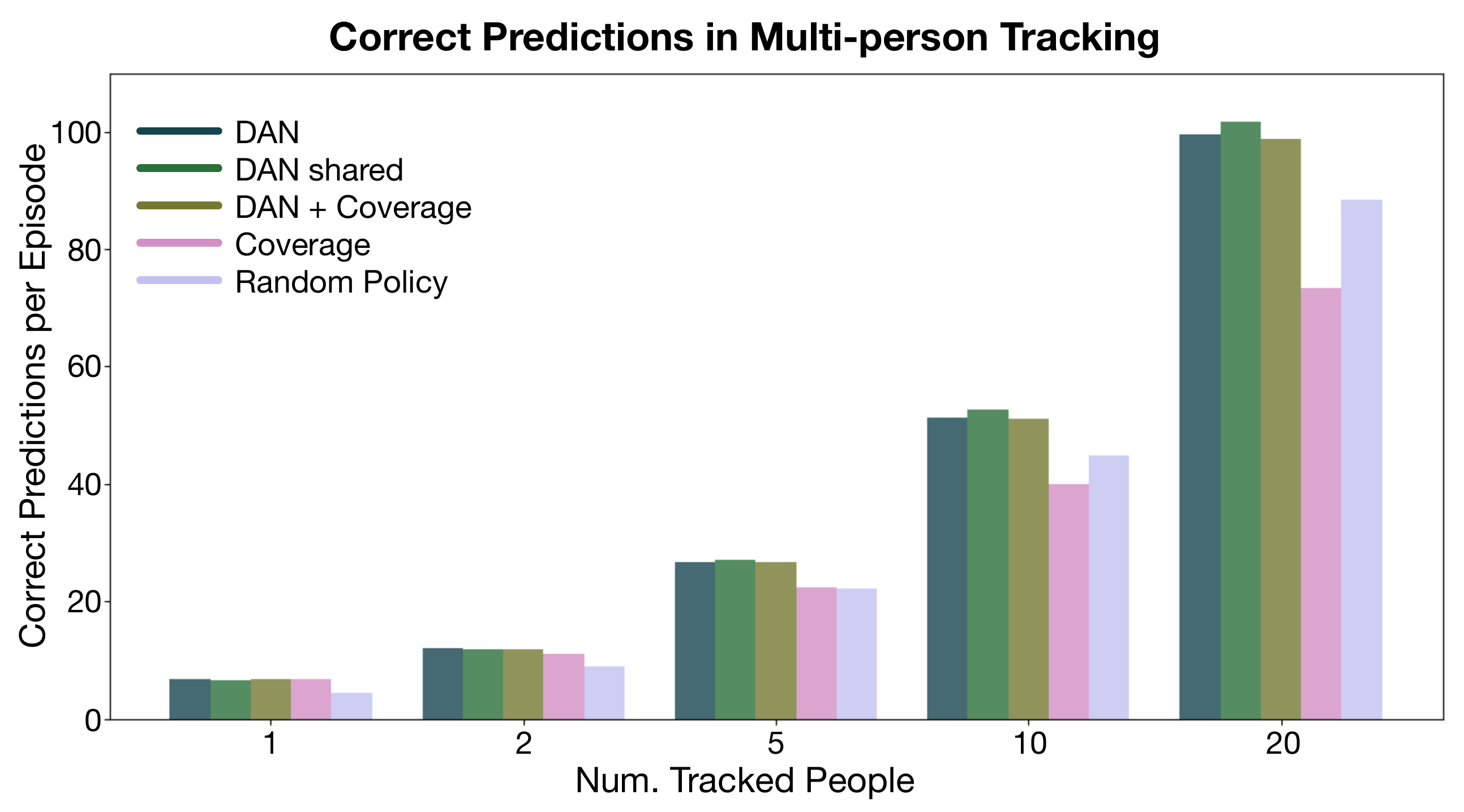}
\caption{Training curves and multi-person tracking results for sensor selection for DAN agent.} \label{fig:ssRes}
\end{center}
\end{figure}

We also compare to a model-based particle filter approach and to a DAN model that is trained on a terminal reward (only provided at the end of the episode during training) instead of a continuous reward that is provided at each time step of the training. However, these two baselines performed particularly poorly. The particle filter based approach that had access to the learned transition dynamics (under Gaussian assumption) and the true observations noise results in a performance of 1.9 (less than 1/3 of DAN's) total reward per trajectory for 400 particles and saturates at 3.5 (less than 1/2 of DAN's performance) for 1500 particles and after tuning many parameters of the particle filter. Rewarding an agent only at the termination of the episode does not work either as for tracking the agent needs continuous feedback. We did not experiment further with these baselines.

For training DAN and baseline methods, we swept over the exploration probability $\epsilon: \{0.1, 0.3, 0.5\}$ and $\mathsf{Q/M}$ network learning rate: $\{0.01, 0.001, 0.0001\}$. For all methods we found $\epsilon=0.1$ and $\mathsf{Q/M}$ network step-size $=0.001$ to work the best. We first train $x, y$-agents for tracking a single person, and the training curves are shown in Figure \ref{fig:ssRes} (a) and (b). We perform 25 runs for each agent. 
We use track length of 12, sampled from the training track dataset, and train it for 60,000 steps (or 5000 episodes). We also collect experience without training for 3,000 steps (250 episodes). For updating the networks, we use a mini-batch of size 4 to sample episodes from the replay buffer, with trace length of 8 (not updating on the first 4 steps of the episode).

We test the trained DAN agents in single-person and multi-person tracking. For single person tracking, at each time step the agent queries the $Q$-values from both the DAN-$x$ agent and DAN-$y$ agent and selects the camera (action) that maximizes the average $Q$-value among all the available actions. For multi-person tracking we transfer the policy learned for single-person tracking to track multiple people. The same $\mathsf{Q}$ network is used to compute the $Q$-values of selecting each camera for each person independently. Finally, the agent selects the camera that maximizes the average $Q$-value from all the people present in the scene, and the $\mathsf{M}$ network predicts the location of all the people based on the observation. During evaluation the agent is rewarded +1 only if both $x$ and $y$ coordinates are predicted correctly. Figure \ref{fig:ssRes} (c) shows the result of multi-person tracking of 500 test tracks. In all cases, variants of DAN outperform the random and coverage baselines. Surprisingly, sharing representation is comparable to DAN with separate representations for $\mathsf{Q}$ and $\mathsf{M}$ networks, which is good as sharing representations reduces the number of parameters.

 \subsection{Discrete attention}
In this set of experiments, we apply DAN to learn discrete models of attention in which the agent can observe the unknown variable only via a discrete set of available glimpses. As compared to sensor selection here the hidden variable is not changing and selecting one of the available glimpse does not necessarily provides the agent enough information for predicting the digit in the image. So ideally the agent must learn representation that help it predict the digits from as little glimpses as possible. At the start of the episode the agent receives a blank image and as it makes its selections, glimpses of the images are revealed. This task is discussed in earlier papers \citep{Mnih14} with different glimpse styles depending on the motivation of the paper. However, many earlier approaches based on deep reinforcement learning model this task with a terminal reward the agent receives the feedback (reward and true label) about its policy only at the end of the episode. Our formulation models this as a continuous feedback task, where the agent makes a prediction at each time step and is rewarded at every time step for making correct predictions. Since during the training the true label is available to the agent, there is no point of making this label available to the agent only at the end of the episode. 

For this experiment, the $\mathsf{Q}$ and $\mathsf{M}$ networks are identical convolutional neural networks (CNN) with two convolutional layers. This is followed by a max pooling layer and two fully connected layers with a dropout \cite{Srivastava14} probability of 0.5. ReLUs are used as activation units for all layers. The length of the episode is kept to 12 and the networks are updated every 4 steps. A learning rate of 0.0005 (after performing a parameter sweep over \{0.05, 0.005, 0.0005\}) is used with the Adam optimizer \cite{Kingma14}. An exploration probability of 0.05 is used throughout training but an exploration probability of 1 is used during the first 1500 episodes.

\begin{figure}
 \centering
  \includegraphics[scale=0.35]{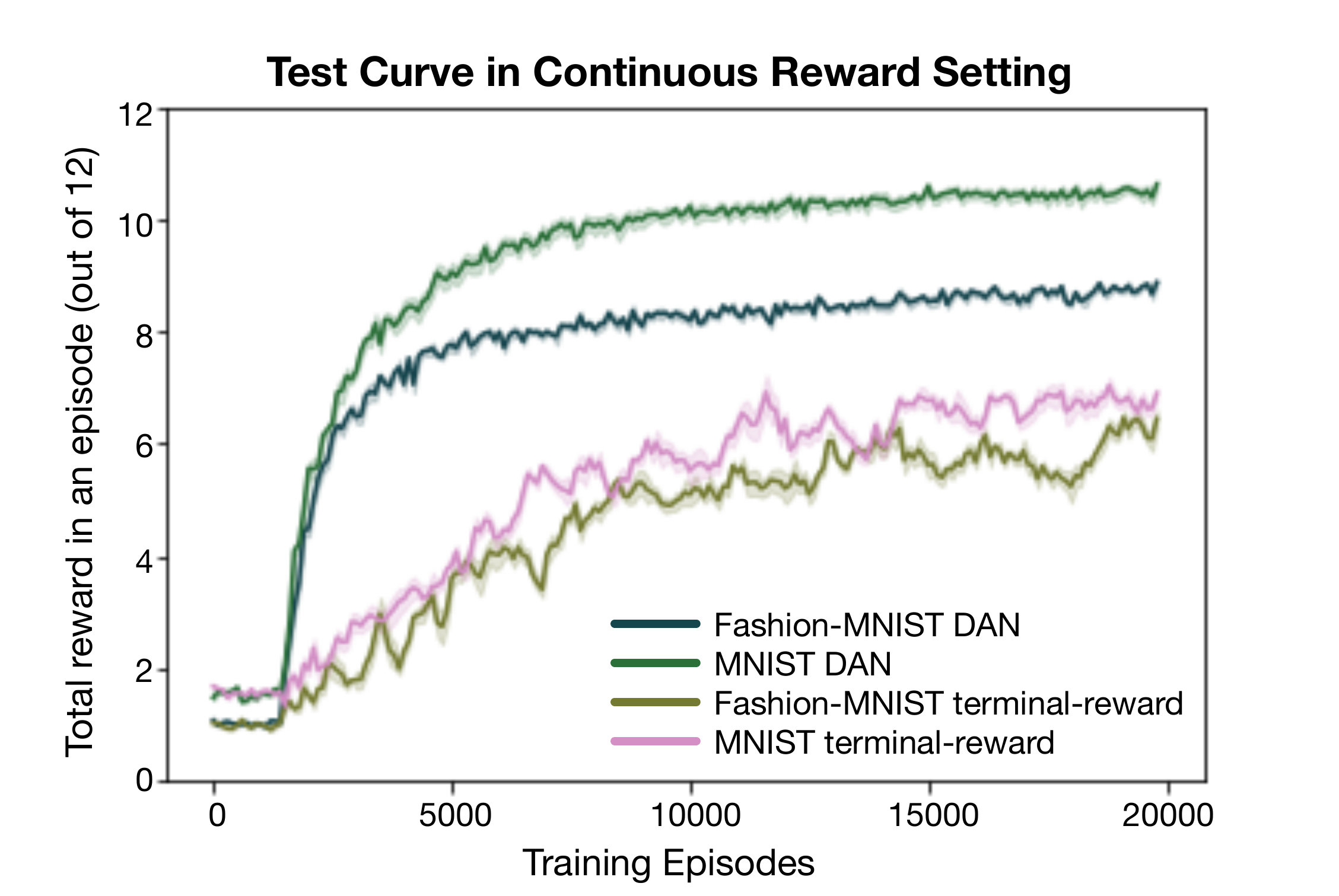}  \\
  \includegraphics[scale=0.35]{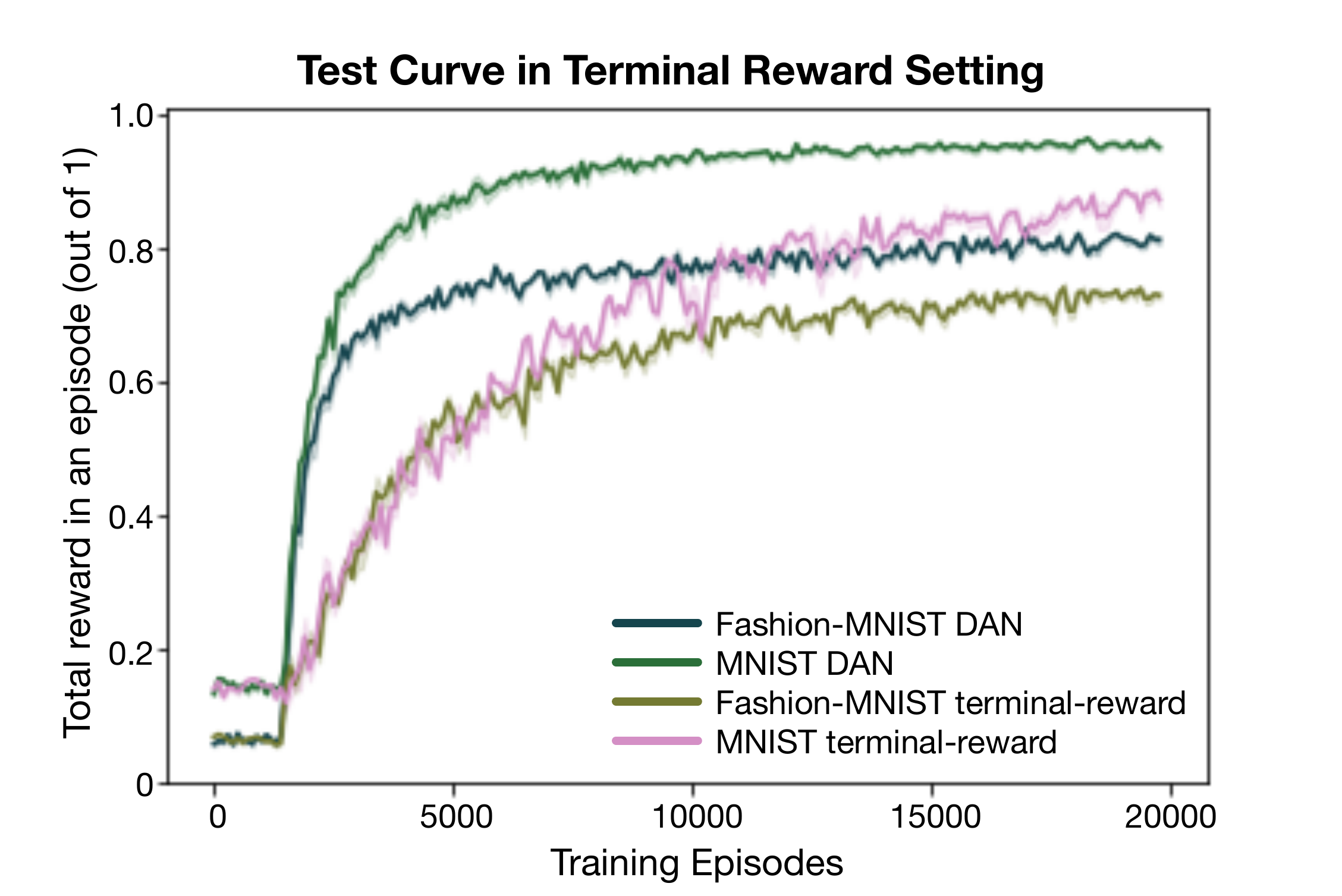} \\
  \includegraphics[scale=0.14]{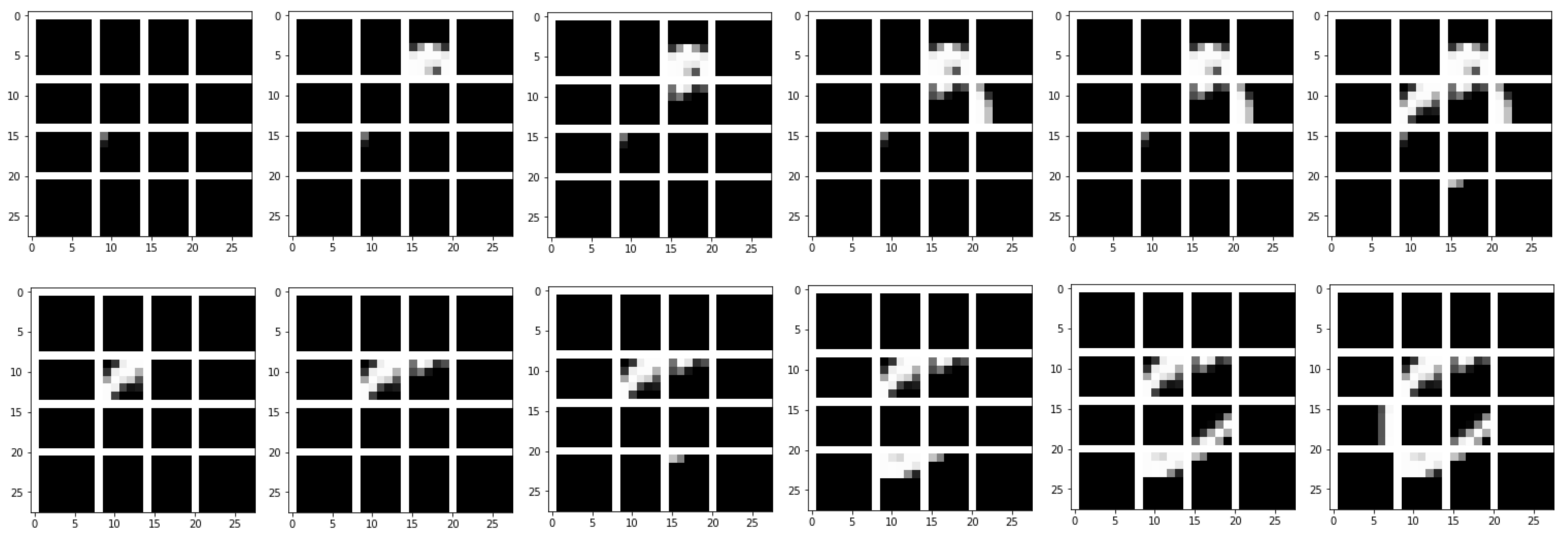}
  \caption{(top, middle) Performance results for discrete attention in continuous/terminal reward setting averaged over 10 runs, (bottom) Sequence of MNIST glimpses selected by the DAN agent for two separate examples.} 
  \label{fig:mnistPerf}
  \end{figure}

We compare and evaluate DAN trained with continuous reward and DAN trained with terminal reward in two different setting (a) continuous reward and in (b) terminal reward settings. For the evaluation in the continuous reward setting the agent is rewarded at each time step for an episode of length 12 (so the agent can earn a maximum reward of 12) where as in the terminal reward setting the agent is evaluated on a terminal reward that the agent receives at the end of the episode. Figure \ref{fig:mnistPerf} shows the average test reward on 500 test images (sampled from a set of 10000 test images at every evaluation) as a function of the training episode for both MNIST and fashion MNIST. The top figure shows the results for when the agent is rewarded at each time step and the middle figure shows results when evaluating on a terminal reward. In both settings the agent trained on continuous reward is significantly faster than the terminal reward setting simply because (a) it is simultaneously trained to select glimpses that can most quickly identify the classes as well as to identify classes from as few glimpses as possible; (b) it better uses the same set of experience to make more updates to its parameters  because of the continuous feedback. DAN with terminal rewards performs particularly poorly in the continuous reward setting, as the $\mathsf{M}$ network in the terminal reward DAN is not trained to predict the class from smaller number of glimpses. Furthermore, the results also show that, at least for MNIST, it is possible to identify the digits from only one or two glimpses, as the DAN agent gets an average reward of more than 10 out of 12 on test images, whereas for the fashion MNIST, correctly predicting the right class requires a couple of more glimpses.

\section{Related Work}
Prediction rewards are popular in reinforcement learning, for example, visual attention models \citep{Mnih14, Haque16}, question answering systems \citep{Nara16, Buck18}, learning active learning strategies \citep{Bachman17}, intrinsic motivation \citep{Pathak17}. On the other hand, literature such as active perception \citep{SatsangiJournal17}, sensor placement \citep{Krause05uai}, and active sensing \citep{Kreucher05}, formulate the problem of either sensor management/selection/fusion with information gain as the objective function. Our paper ties these fields together by exactly establishing the relationship between prediction rewards and information gain. 

Model-based methods as proposed in various active perception \citep{Allen85, Kelly70, Ye95, Burgard97, Wilkes92, Bruce09, Bajcsy16} and sensor selection \citep{Kreucher05,Williams07,Spaan09,Joshi09,Hero11,Monari10,Tessens14} literature require a model of the world for their application. The model-free nature of DAN lets us to deploy deep RL machinery for sensor selection in a principled manner. Recently, attempts to perform \emph{online active perception} \citep{Ghasemi19, Mousavi19} either focus on fast subset selection or on neural network architecture improvement, e.g., for MNIST, but offer no insight on connecting prediction rewards to information gain.

Neural models of visual attention, such as that of \cite{Mnih14} and \cite{Haque16}, consider a classification task where the unknown variable is not changing at every time step. Consequently they model the loss function as one conditioned on a terminal reward that the agent receives if it correctly classifies the image after certain time steps. By contrast, sensor selection is a continual learning setting where the position of the person is continuously changing and the agent must predict it at each time step using noisy observations. Moreover, the agent in the classification task is free to adjust the size and shape of the glimpse. By contrast, in sensor selection the agent can only attend to the scene with a fixed (already deployed) set of glimpses that cannot be resized. 

Approaches that use intrinsic motivation \citep{Pathak17} and auxiliary tasks \citep{Jaderberg16} use the prediction reward as a means to train an agent to solve a specific task. The performance of the policy is evaluated on an \emph{extrinsic} state-based reward; the goal is not prediction accuracy. 
By contrast, our aim is to maximize the prediction reward and not use it achieve any other target. 

DANs are related to learning in POMDPs/MDPs \citep{James09,Katt17} but are designed to learn hidden representations of the world as opposed to the transition or observation function after assuming/designing the representation of the world. 
Generative adversarial networks (GANs) \citep{Goodfellow14} and DIAYN \citep{Eysenbach18} train two different networks on each other's feedback. However, GANs assume an adversarial relationship between the two networks leading to a min-max formulation of the final objective, while DANs lead to max-max formulation of the final objective.  DIAYN \cite{Eysenbach18} consists of two networks, one of which tries to help the other discriminate between objects in order to learn various skills, whereas our aim is to predict the unknown variable and maximize the prediction reward in itself. 

Neural estimators based on variational lower bound to KL divergence \citep{Mohamed15, Belghazi18} do not acknowledge the connection between prediction rewards and negative entropy as we do. These approaches also do not categorize the error between the variational lower bound and information gain as we do, which can be further exploited to vanish this error. Thanks to the theory of convex duality, our insights are extendible to any convex functions of the belief and not just KL-divergence. Furthermore, these approaches propose an estimator but do not demonstrate the use of these estimator in a partially observable setting for sensor selection as we do. 

Our results are also related to $\rho$POMDP \cite{Araya10} and POMDP-IR \cite{Spaan15} and their equivalence as established in \cite{SatsangiJournal17}. Apart from the distinction made earlier in Section 3, this paper present a deep reinforcement learning algorithm as compared to a model-based planning method they propose. Approaches \citep{Le08,Kostrikov16} that model active perception tasks with surrogate state-based rewards are fundamentally different from our formulation because of the definition of the reward.

\section{Conclusions \& Future Work}
This paper established that an agent trying to maximize a prediction reward naturally maximizes a lower bound on the information gain. This insight helps tie together multiple disparate sub-fields of machine learning that use prediction rewards and information gain separately. The DAN algorithm follows as a consequence of these results, which uses a model-free RL agent to gather data, based on prediction rewards, while simultaneously learning the predictions. We show that the approach improves performance in both a sensor selection and two visual attention tasks.

\section{Acknowledgement}
We would like to thank anonymous reviewers for their comments. This project has received funding from the European Research Council (ERC) under the European Union’s Horizon 2020 research and innovation programme (grant agreement number 637713). This project had received funding from the European Research Council (ERC) under the European Union's Horizon 2020 research and innovation programme (grant agreement No.~758824 \textemdash INFLUENCE). 
\hfill ~ \includegraphics[width=0.25\columnwidth]{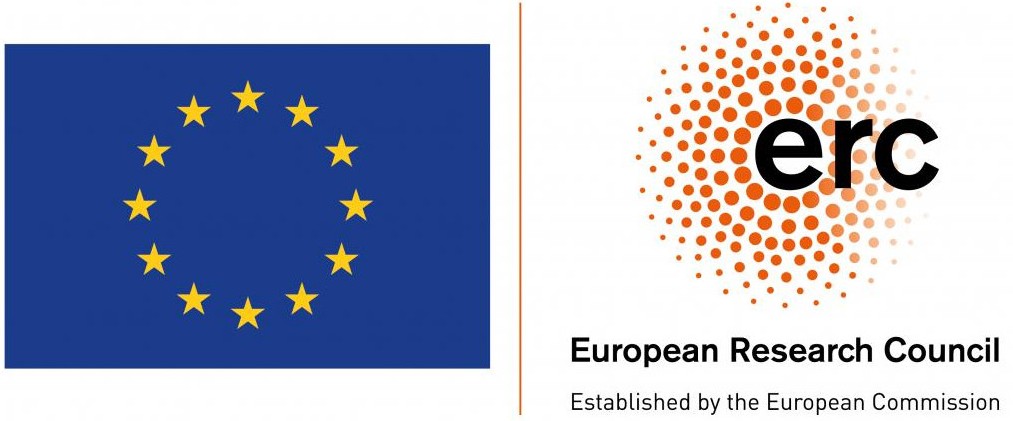}.

\bibliographystyle{ACM-Reference-Format}  
\bibliography{danBib}

\end{document}